\begin{document}

\title{Information Gathering in Decentralized POMDPs by Policy Graph Improvement}  


\author{Mikko Lauri}
\affiliation{%
  \institution{University of Hamburg}
  \city{Hamburg} 
  \state{Germany} 
  \postcode{20255}
}
\email{lauri@informatik.uni-hamburg.de}
\author{Joni Pajarinen}
\affiliation{%
  \institution{TU Darmstadt}
  \city{Darmstadt} 
  \state{Germany} 
  \postcode{64289}
}
\email{pajarinen@ias.tu-darmstadt.de}

\author{Jan Peters}
\affiliation{%
  \institution{TU Darmstadt}
  \city{Darmstadt} 
  \state{Germany} 
  \postcode{64289}
}
\email{peters@ias.tu-darmstadt.de}

\begin{abstract}  
Decentralized policies for information gathering are required when multiple autonomous agents are deployed to collect data about a phenomenon of interest without the ability to communicate.
Decentralized partially observable Markov decision processes (Dec-POMDPs) are a general, principled model well-suited for such decentralized multiagent decision-making problems.
In this paper, we investigate Dec-POMDPs for decentralized information gathering problems.
An optimal solution of a Dec-POMDP maximizes the expected sum of rewards over time.
To encourage information gathering, we set the reward as a function of the agents' state information, for example the negative Shannon entropy.
We prove that if the reward is convex, then the finite-horizon value function of the corresponding Dec-POMDP is also convex.
We propose the first heuristic algorithm for information gathering Dec-POMDPs, and empirically prove its effectiveness by solving problems an order of magnitude larger than previous state-of-the-art.
\end{abstract}

 \begin{CCSXML}
<ccs2012>
<concept>
<concept_id>10003752.10010070.10010071.10010316</concept_id>
<concept_desc>Theory of computation~Markov decision processes</concept_desc>
<concept_significance>500</concept_significance>
</concept>
<concept>
<concept_id>10010147.10010178.10010199.10010201</concept_id>
<concept_desc>Computing methodologies~Planning under uncertainty</concept_desc>
<concept_significance>500</concept_significance>
</concept>
<concept>
<concept_id>10010147.10010178.10010199.10010202</concept_id>
<concept_desc>Computing methodologies~Multi-agent planning</concept_desc>
<concept_significance>500</concept_significance>
</concept>
<concept>
<concept_id>10003752.10003809.10003716.10011136.10011797.10011801</concept_id>
<concept_desc>Theory of computation~Randomized local search</concept_desc>
<concept_significance>100</concept_significance>
</concept>
</ccs2012>
\end{CCSXML}

\ccsdesc[500]{Theory of computation~Markov decision processes}
\ccsdesc[100]{Theory of computation~Randomized local search}
\ccsdesc[500]{Computing methodologies~Planning under uncertainty}
\ccsdesc[500]{Computing methodologies~Multi-agent planning}

\keywords{decentralized POMDPs; multi-agent planning; planning under uncertainty; information theory}  

\maketitle


\section{Introduction}
Autonomous agents and robots can be deployed in information gathering tasks in environments where human presence is either undesirable or infeasible.
Examples include monitoring of deep ocean conditions, or space exploration.
It may be desirable to deploy a team of agents, e.g., due to the large scope of the task at hand, resulting in a decentralized information gathering task.

Some recent works, e.g.,~\cite{Charrow2014,Schlotfeldt2018}, tackle decentralized information gathering while assuming perfect, instantaneous communication between agents, while centrally planning how the agents should act.
In terms of communication, we approach the problem from the other extreme as a decentralized partially observable Markov decision process (Dec-POMDP)~\cite{Oliehoek2016}.
In a Dec-POMDP, no explicit communication between the agents is assumed\footnote{If desired, communication may be included into the Dec-POMDP model~\cite{Spaan2006,Wu2011}.}. 
Each agent acts independently, without knowing what the other agents have perceived or how they have acted.

Informally, a Dec-POMDP model consists of a set of agents in an environment with a hidden state. 
Each agent has its own set of local actions, and a set of local observations it may observe.
Markovian state transition and observation processes conditioned on the agents' actions and the state determine the relative likelihoods of subsequent states and observations.
A reward function determines the utility of executing any action in any state.
The objective is to centrally design optimal control policies for each agent that maximize the expected sum of rewards over a finite horizon of time.
The control policy of each agent depends only on the past actions and observations of that agent, hence no communication during execution of the policies is required.
However, as policies are planned centrally, it is possible to reasong about the joint information state of all the agents.
It is thus possible to calculate probability distributions over the state, also known as joint beliefs.

A decentralized information gathering task differs from other multiagent control tasks by the lack of a goal state.
It is not the purpose of the agents to execute actions that reach a particular state, but rather to observe the environment in a manner that provides the greatest amount of information while satisfying operational constraints.
As the objective is information acquisition, the reward function depends on the joint belief of the agents.
Convex functions of a probability mass function naturally model certainty~\cite{DeGroot2004}, and have been proposed in the context of single-agent POMDPs~\cite{Araya2010} and Dec-POMDPs~\cite{Lauri2017}.
However, to the best of our knowledge no heuristic or approximate algorithms for convex reward Dec-POMDPs have been proposed, and no theoretical results on the properties of such Dec-POMDPs exist in the literature.

In this paper, we propose the first heuristic algorithm for Dec-POMDPs with a convex reward.
We prove the value function of such Dec-POMDPs is convex, generalizing the similar result for single-agent POMDPs~\cite{Araya2010}.
The Dec-POMDP generalizes other decision-making formalisms such as multi-agent POMDPs and Dec-MDPs~\cite{Bernstein2002}.
Thus, our results also apply to these special cases removing parts required by the more general Dec-POMDP.

Our paper has three contributions.
Firstly, we prove that in Dec-POMDPs where the reward is a convex function of the joint belief, the value function of any finite horizon policy is convex in the joint belief.
Secondly, we propose the first heuristic algorithm for Dec-POMDPs with a reward that is a function of the agents' joint state information.
The algorithm is based on iterative improvement of the value of fixed-size policy graphs.
We derive a lower bound that may be improved instead of the exact value, leading to computational speed-ups.
Thirdly, we experimentally verify the feasibility and usefulness of our algorithm.
For Dec-POMDPs with a state information dependent reward, we find policies for problems an order of magnitude larger than previously.

The paper is organized as follows.
We review related work in Section~\ref{sec:related_work}.
In Section~\ref{sec:decentralized_information_gathering}, we define our Dec-POMDP problem and introduce notation and definitions.
Section~\ref{sec:value_of_a_policy_node} derives the value of a policy graph node.
In Section~\ref{sec:properties_of_information_gathering_Dec-POMDPs}, we prove convexity of the value in a Dec-POMDP where the reward is a convex function of the state information.
Section~\ref{sec:policy_graph_improvement} introduces our heuristic policy improvement algorithm.
Experimental results are presented in Section~\ref{sec:experiments}, and concluding remarks are provided in Section~\ref{sec:conclusion}.

\section{Related work} 
\label{sec:related_work}
Computationally finding an optimal decentralized policy for a finite-horizon Dec-POMDP is NEXP-complete~\cite{Bernstein2002}.
Exact algorithms for Dec-POMDPs are usually based either on backwards in time dynamic programming~\cite{Hansen2004}, forwards in time heuristic search~\cite{Szer2005,Oliehoek2013}, or on exploiting the inherent connection of Dec-POMDPs to non-observable Markov decision processes~\cite{Dibangoye2016,MacDermed2013}.
Approximate and heuristic methods have been proposed, e.g., based on finding locally
optimal ``best response'' policies for each agent~\cite{Nair2003}, memory-bounded dynamic programming~\cite{Seuken2007}, cross-entropy optimization over the space of policies~\cite{Oliehoek2008}, or monotone iterative improvement of fixed-size policies~\cite{Pajarinen2011}.
Algorithms for special cases such as goal-achievement Dec-POMDPs~\cite{Amato2009} and factored Dec-POMDPs, e.g.,~\cite{Oliehoek2008b}, have also been proposed.
Structural properties, such as transition, observation, and reward independence between the agents, can also be leveraged and may even result in a problem with a lesser computational complexity~\cite{allen2009complexity}.
Some Dec-POMDP algorithms~\cite{Oliehoek2013} take advantage of plan-time sufficient statistics, which are joint distributions over the hidden state and the histories of the agents' actions and observations~\cite{Oliehoek2013b}.
The sufficient statistics provide a means to reason about possible distributions over the hidden state, also called joint beliefs, reached under a given policy.

The expected value of a reward function that depends on the hidden state and action is a linear function of the joint belief.
These types of rewards are standard in Dec-POMDPs.
In the context of single-agent POMDPs, Araya-L\'{o}pez et al.~\cite{Araya2010} argue that information gathering tasks are naturally formulated using a reward function that is a convex function of the state information and introduce the $\rho$POMDP model with such a reward.
This enables application of, e.g., the negative Shannon entropy of the state information as a component of the reward function.
Under certain conditions, an optimal value function of a $\rho$POMDP is Lipschitz-continuous~\cite{Fehr2018} which may be exploited in a solution algorithm.
An alternative formulation for information gathering in single-agent POMDPs is presented in~\cite{Spaan2015}, and its connection to $\rho$POMDPs is characterized in~\cite{Satsangi2018}.
Recently, \cite{Lauri2017} proposes an extension of the ideas presented in~\cite{Araya2010} to the Dec-POMDP setting.
Entropy is applied in the reward function to encourage information gathering.
Problem domains with up to 25 states and 5 actions per agent are solved with an exact algorithm.

In this paper, we present the first heuristic algorithm for Dec-POMDPs with rewards that depend non-linearly on the joint belief.
Our algorithm is based on the combination of the idea of using a fixed-size policy represented as a graph~\cite{Pajarinen2011} with plan-time sufficient statistics~\cite{Oliehoek2013b} to determine joint beliefs at the policy graph nodes.
The local policy at each policy graph node is then iteratively improved, monotonically improving the value of the node.
We show that if the reward function is convex in the joint belief, then the value function of any finite-horizon Dec-POMDP policy is convex as well.
This is a generalization of a similar result known for single-agent POMDPs~\cite{Araya2010}.
From this property, we obtain a lower bound for the value of a policy that we empirically show improves the efficiency of our algorithm.
Compared to prior state-of-the-art in Dec-POMDPs with convex rewards~\cite{Lauri2017}, our algorithm is capable of handling problems an order of magnitude larger.

\section{Decentralized POMDPs} 
\label{sec:decentralized_information_gathering}
We next formally define the Dec-POMDP problem we consider.
Contrary to most earlier works, we define the reward as a function of \emph{state information} and action.
This allows us to model information acquisition problems.
We choose the finite-horizon formulation to reflect the fact that a decentralized information gathering task should have a clearly defined end after which the collected information is pooled and subsequent inference or decisions are made.

A finite-horizon Dec-POMDP is a tuple $\big(I$, $S$, $\{A_i\}$, $\{Z_i\}$, $P^s$, $P^z$, $b^0$, $T$, $\{\rho_t\}\big)$, where $I=\{1, \ldots, n\}$ is the set of agents, $S$ is a finite set of hidden states, $A_i$ and $Z_i$ are the finite action and observation sets of agent $i \in I$, respectively, $P^s$ is the state transition probability that gives the conditional probability $P^s(s^{t+1}\mid s^t, a^t)$ of the new state $s^{t+1}$ given the current state $s^t$ and joint action $a^t = (a_1^t, \ldots, a_n^t) \in A$, where $A$ is the joint action space obtained as the Cartesian product of $A_i$ for all $i\in I$, $P^z$ is the observation probability that gives the conditional probability $P^z(z^{t+1}\mid s^{t+1}, a^t)$ of the joint observation $z^{t+1}=(z_1^{t+1}, \ldots z_n^{t+1})\in Z$ given the state $s^{t+1}$ and previous joint action $a^t$, with $Z$ being the joint observation space defined as the Cartesian product of $Z_i$ for $i\in I$, $b^0 \in \Delta(S)$ is the initial state distribution\footnote{We denote by $\Delta(S)$ the space of probability mass functions over $S$.} at time $t=0$, $T \in \mathbb{N}$ is the problem horizon, and $\rho_t: \Delta(S)\times A \to \mathbb{R}$ are the reward functions at times $t=0, \ldots, T-1$, while $\rho_T:\Delta(S) \to \mathbb{R}$ determines a final reward obtained at the end of the problem horizon.

The Dec-POMDP starts from some state $s^0 \sim b^0$.
Each agent $i\in I$ then selects an action $a_i^0 \in A_i$, and the joint action $a^0=(a_1^0, \ldots, a_n^0)\in A$ is executed.
The state then transitions according to $P^s$, and each agent perceives an observation $z_i^1 \in Z_i$, where the likelihood of the joint observation $z^1=(z_1^1, \ldots, z_n^1) \in Z$ is determined according to $P^z$.
The agents then select the next actions $a_i^1$, and the same steps are repeated until $t=T$ and the task ends.

Optimally solving a Dec-POMDP means to design a policy for each agent that encodes which action the agent should execute conditional on its past observations and actions; in a manner such that the expected sum of rewards collected is maximized.
In the following, we make the notion of a policy exact, and determine the expected sum of rewards collected when executing a policy.

\subsection{Histories and policies} 
\label{sub:histories_and_policies}


Define the history set of agent $i$ at time $t = 1, \ldots, T$ as $H_i^t = \{ (b^0, a_i^0, z_i^1, \ldots, a_i^{t-1}, z_i^{t}) \mid a_i^k \in A_i, z_i^k \in Z_i\}$, and $H_i^0 = \{(b^0)\}$.
A local history $h_i^t \in H_i^t$ contains all information available to agent $i$ to decide its next action $a_i^t$.
We define the joint history set $H^t$ as the Cartesian product of $H_i^t$ over $i\in I$.
We write a joint history as $h^t = (h_1^t, \ldots, h_n^t) \in H^t$, or equivalently as $h^t=(b_0, a^0, z^1, \ldots, a^{t-1},z^{t}) \in H^t$ where $a^k\in A$ and $z^k \in Z$.
Both the local and joint histories satisfy the recursion $h^t = (h^{t-1}, a^{t-1}, z^t)$.

\begin{figure}[t]
\begin{center}
  \begin{tikzpicture}[main node/.style={circle,fill=gray!10,draw,minimum size=0.5cm,inner sep=2pt},
            edge/.style={->,> = latex'}, node distance = 1.5cm, auto]
      \node[main node] (0) {$q_i^{0}$};
      \node[main node] (1) [above right of=0] {$q_i^1$};
      \node[main node] (2) [below right of=0] {$q_i^2$};
      \node[main node] (3) [right of=1] {$q_i^3$};
      \node[main node] (4) [right of=2] {$q_i^4$};

      \draw[edge] (0) -- (1) node[midway] {$z_i^0$};
      \draw[edge] (0) -- (2) node[midway] {$z_i^1$};
      \draw[edge] (1) -- (3) node[midway] {$z_i^0$};
      \draw[edge] (2) -- (3) node[midway,left] {$z_i^1$};
      \draw[edge] (1) -- (4) node[midway,right] {$z_i^0$};
      \draw[edge] (2) -- (4) node[midway] {$z_i^1$};

      \node (tab1) [below right of=3, xshift=0.5cm] {%
        \begin{tabular}{cc}
        \toprule
        $q_i$ & $\gamma_i(q_i)$\\
        \midrule
        $q_i^0$ & $a_i^0$\\
        $q_i^1$ & $a_i^1$\\
        $q_i^2$ & $a_i^1$\\
        $q_i^3$ & $a_i^0$\\
        $q_i^4$ & $a_i^0$\\
        \bottomrule
        \end{tabular}};
   \end{tikzpicture} 
\end{center}
\caption{A local policy for agent $i$. The policy encodes the agent's behavior conditional on local observations. The shaded circles show the set of nodes $Q_i$. The starting node is $q_0^i$. The table on the right defines the output function $\gamma_i$, and the labels on the edges define the node transition function $\lambda_i$. First, the agent executes $\gamma_i(q_i^0)$. Conditional on the next observation perceived, the next node is $q_i^1$ or $q_i^2$. At the next node, the action to execute is again looked up from $\gamma_i$.}
\label{fig:pg_example}
\end{figure}
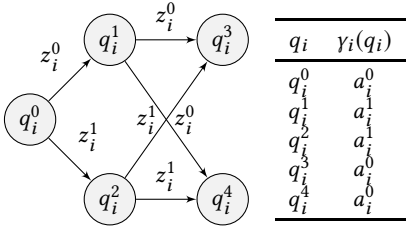

A solution of a finite-horizon Dec-POMDP is a local policy for each agent that determines which action an agent should take given a local history in $H_i^t$ for any $t=0, \ldots, T-1$.
We define a local policy similarly as~\cite{Pajarinen2011} as a deterministic finite-horizon controller viewed as a directed acyclic graph.
\begin{definition}[Local policy]
  For agent $i$, a local policy is $\pi_i=(Q_i, q_i^0, \gamma_i, \lambda_i)$, where $Q_i$ is a finite set of nodes, $q_i^0 \in Q_i$ is a starting node, $\gamma_i: Q_i\to A_i$ is an output function, and $\lambda_i: Q_i\times Z_i \to Q_i$ is a node transition function.
\end{definition}
Fig.~\ref{fig:pg_example} shows an example of a local policy.
Note that a sufficiently large graph can represent any finite horizon local policy.

We constrain the structure of local policies by enforcing that each node can be identified with a unique time step.
We call this the property of temporal consistency.
\begin{definition}[Temporal consistency]
\label{asm:temporal}
A local policy $\pi_i=(Q_i$, $q_i^0$, $\gamma_i$, $\lambda_i)$ is temporally consistent if $Q_i = \bigcup_{t=0}^{T-1} Q_i^t$ where $Q_i^t$ are pairwise disjoint and non-empty, and $Q_i^0 = \{q_i^0\}$, and for any $t=0, \ldots, T-2$, for $q_i^t \in Q_i^t$, for all $z_i\in Z_i$, $\lambda_i(q_i^t, z_i) \in Q_i^{t+1}$.
\end{definition}
In a temporally consistent policy, at a node in $Q_i^t$ the agent has $(T-t)$ decisions left until the end of the problem horizon.
Temporal consistency guarantees that exactly one node in each set $Q_i^t$ can be visited, and that after visiting a node in $Q_i^t$, the next node will belong to $Q_i^{t+1}$.
In Fig.~\ref{fig:pg_example}, $T=3$, and $Q_i^0=\{q_i^0\}$, $Q_i^1=\{q_i^1,q_i^2\}$, $Q_i^2=\{q_i^3,q_i^4\}$.
Temporal consistency is assumed throughout the rest of the paper.

A joint policy describes the joint behaviour of all agents and is defined as the combination of the local policies $\pi_i$.
\begin{definition}[Joint policy]
  Given local policies $\pi_i=(Q_i$, $q_i^0$, $\gamma_i$, $\lambda_i)$ for all $i \in I$, a joint policy is $\pi=(Q, q^0, \gamma, \lambda)$, where $Q$ is the Cartesian product of all $Q_i$, $q^0 = (q_1^0, \ldots, q_n^0) \in Q$, and for $q=(q_1, \ldots, q_n)\in Q$ and $z=(z_1,\ldots, z_n)\in Z$, $\gamma:Q\to A$ is such that $\gamma(q) = (\gamma_1(q_1), \ldots, \gamma_n(q_n))$, and $\lambda:Q\times Z \to Q$ is such that $\lambda(q,z) = (\lambda_1(q_1,z_1), \ldots, \lambda_n(q_n,z_n))$.
\end{definition}
Temporal consistency naturally extends to joint policies, such that there exists a partition of $Q$ by pairwise disjoint sets $Q^t$.

\subsection{Bayes filter} 
\label{sub:bayes_filter}
While planning policies for information gathering, it is useful to reason about the joint belief of the agents given some joint history.
This can be done via Bayesian filtering as described in the following.

The initial state distribution $b^0$ is a function of the state at time $t=0$, and for any state $s^0 \in S$, $b^0(s^0)$ is equal to the probability $P(s^0 \mid h^0)$.
When action $a^0$ is executed and observation $z^{1}$ is perceived, we may find the posterior belief $P(s^1 \mid h^1)$ where $h^{1}=(h^0, a^0, z^1)$ by applying a Bayes filter.

In general, given any current joint belief $b^t$ corresponding to some joint history\footnote{For notational convenience, we drop the explicit dependence of $b^t$ on the joint history.} $h^t$, and a joint action $a^t$ and joint observation $z^{t+1}$, the posterior joint belief is calculated by
\begin{equation}
  \label{eq:bayes_filter}
  b^{t+1}(s^{t+1}) = \frac{P^z(z^{t+1}\mid s^{t+1},a^{t})\sum\limits_{s^{t} \in S}P^s(s^{t+1}\mid a^{t},s^{t})b^{t}(s^{t})}{\eta(z^{t+1} \mid b^{t}, a^{t})},
\end{equation}
where $\eta(z^{t+1} \mid b^{t}, a^{t})$ is the normalization factor equal to the prior probability of observing $z^{t+1}$.
Given $b^0$ and any joint history $h^t=(b^0$,$a^0$,$z^1$, $\ldots$, $a^{t-1}$, $z^t)$,  repeatedly applying Eq.~\eqref{eq:bayes_filter} yields a sequence $b^0, b^1, \ldots, b^t$ of joint beliefs.
We shall denote the application of the Bayes filter by the shorthand notation $b^{t+1} = \zeta(b^t, a^t, z^{t+1})$.
Furthermore, we shall denote the filter that recovers $b^t$ given $h^t$ by repeated application of $\zeta$ by a function $\tau: H^t \to \Delta(S)$.

\subsection{Value of a policy} 
\label{sub:value_functions}
The value of a policy $\pi$ is equal to the expected sum of rewards collected when acting according to the policy.
We define value functions $V_t^\pi:\Delta(S) \times Q^{t} \to \mathbb{R}$ that give the expected sum of rewards when following policy $\pi$ until the end of the horizon when $t$ decisions have been taken so far, for any joint belief $b\in\Delta(S)$ and any policy node $q\in Q^{t}$.

The time step $t=T$ is a special case when all actions have already been taken, and the value function only depends on the joint belief and is equal to the final reward: $V_T(b) = \rho_T(b)$.

For $t=T-1$, one decision remains, and the remaining expected sum of rewards of executing policy $\pi$ is equal to 
\begin{equation}
  \label{eq:first_value_fun}
  V_{T-1}^\pi(b,q) \!= \!\rho_{T-1}(b,\gamma(q)) + \!\!\!\sum\limits_{z\in Z}\eta(z\mid b, \gamma(q)) V_T\!\left(\zeta(b, \gamma(q), z)\right),
\end{equation}
i.e., the sum of the immediate reward and the expected final reward at time $T$.
From the above, we define $V_t^\pi$ iterating backwards in time for $t=T-2, \ldots, 0$ as
\begin{equation}
\label{eq:value}
  V_t^\pi(b,q) =  \rho_t(b, \gamma(q)) + \mathbb{E}\left[V_{t+1}^\pi\left(\zeta(b, \gamma(q), z), \lambda(q,z)\right) \right],
\end{equation}
where the expectation is under $z \sim \eta(z \mid b, \gamma(q))$.
The expected sum of rewards collected when following a policy $\pi$ is equal to its value $V_0^\pi(b^0, q^0)$.
The objective is to find an optimal policy $\pi^*$ whose value is greater than or equal to the value of any other policy.

\section{Value of a policy node} 
\label{sec:value_of_a_policy_node}
Executing a policy corresponds to a stochastic traversal of the policy graphs (Fig.~\ref{fig:pg_example}) conditional on the observations perceived.
In this section, we first answer two questions related to this traversal process.
First, given a history, when is it consistent with a policy, and which nodes in the policy graph will be traversed (Subsection~\ref{sub:history_consistency})?
Second, given an initial state distribution, what is the probability of reaching a given policy graph node, and what are the relative likelihoods of histories if we assume a given node is reached (Subsection~\ref{sub:node_reachability_probabilities})?
With the above questions answered, we define the value of a policy graph node both in a joint and in a local policy (Subsection~\ref{sub:value_of_policy_nodes}).
These values will be useful in designing a policy improvement algorithm for Dec-POMDPs.

\subsection{History consistency} 
\label{sub:history_consistency}
As illustrated in Fig.~\ref{fig:pg_example}, there can be multiple histories along which a node can be reached.
We define when a history is consistent with a policy, i.e., when executing a policy could have resulted in the given history.
As histories in $H^T$ are reached after executing all actions, in the remainder of this subsection we consider $0 \leq t \leq T-1$.

\begin{definition}[History consistency]
  We are given for all $i\in I$ $\pi_i=$$(Q_i$,$q_i^0$,$\gamma_i$,$\lambda_i)$, and the corresponding joint policy $\pi=($$Q$,$q^0$,$\gamma$,$\lambda)$.
  \begin{enumerate}
    \item A local history $h_i^t=(b_0, a_i^0, z_i^1, \ldots, a_i^{t-1},z_i^{t})$ is consistent with $\pi$ if the sequence of nodes $(q_i^0, q_i^1, \ldots, q_i^t)$ where $q_i^k = \lambda_i(q_i^{k-1},z_i^k)$ for $k=1, \ldots, t$ satisfies: $a_i^k = \gamma_i(q_i^k)$ for every $k$. We say $h_i^t$ ends at $q_i^t \in Q_i^t$ under $\pi$.
    \item A joint history $h^t = (h_1^t, \ldots, h_n^t)$ is consistent with $\pi$ if for all $i\in I$, $h_i^t$ is consistent with $\pi$ and ends at $q_i^{t}$.
    We say $h^t$ ends at $q^t=(q_1^t, \ldots, q_n^t) \in Q^t$ under $\pi$.
  \end{enumerate}
\end{definition}
Due to temporal consistency, any $h_i^{t}\in H_i^{t}$ consistent with a policy will end at some $q_i^t \in Q_i^{t}$.
Similarly, any $h^t\in H^t$ ends at some $q^t \in Q^{t}$.

\subsection{Node reachability probabilities} 
\label{sub:node_reachability_probabilities}
Above, we have defined when a history ends at a particular node.
Using this definition, we now derive the joint probability mass function (pmf) $P(q^t, h^t \mid \pi)$ of policy nodes and joint histories given that a particular policy $\pi$ is executed.

We note that $P(q^t, h^t \mid \pi) = P(q^t\mid h^t, \pi)P(h^t\mid \pi)$ and first consider $P(h^t \mid \pi)$.
The unconditional a priori probability of experiencing the joint history $h^0=(b^0)$ is $P(h^0)=1$.
For $t\geq 1$, the unconditional probability of experiencing $h^t$ is obtained recursively by $P(h^t) = \eta(z^t \mid \tau(h^{t-1}), a^{t-1})P(h^{t-1})$.
Conditioning $P(h^t)$ on a policy yields $P(h^t\mid \pi) = P(h^t)$ if $h^t$ is consistent with $\pi$ and 0 otherwise.
Next, we have $P(q^t \mid h^t, \pi) = \prod_{i\in I} P(q_i^t \mid h_i^t, \pi)$, with $P(q_i^t \mid h_i^t, \pi) = 1$ if $h_i^t$ ends at $q_i^t$ under $\pi$ and 0 otherwise.

Combining the above, the joint pmf is defined as
\[
P(q^t,h^t\mid \pi) = \begin{cases}
  P(h^t) &\text{if } h^t \text{ ends at } q^t  \text{ under } \pi\\
  0 & \text{otherwise}
\end{cases}.
\]
Marginalizing over $h^t$, the probability of ending at node $q^t$ under $\pi$ is
\begin{equation}
\label{eq:prob_of_node}
  P(q^t \mid \pi) = \sum\limits_{h^t \in H^t} P(q^t, h^t \mid \pi),
\end{equation}
and by definition of conditional probability,
\begin{equation}
\label{eq:prob_of_history_conditional_on_node}
  P(h^t \mid q^t, \pi) = \frac{P(q^t, h^t\mid \pi)}{P(q^t\mid \pi)}.
\end{equation}

We now find the probability of ending at $q_i^t$ under $\pi$.
Let $Q_{-i}^t$ denote the Cartesian product of all $Q_j^t$ \emph{except} $Q_i^t$.
Then $q_{-i}^t \in Q_{-i}^t$ denotes the nodes for all agents except $i$.
We have $(q_{-i}^t, q_i^t) \in Q^t$.
The probability of ending at $q_i^t$ under $\pi$ is
\begin{equation}
  P(q_i^t\mid \pi) = \sum\limits_{q_{-i}^t \in Q_{-i}^t} P\left((q_{-i}^t, q_{i}^t)\mid \pi\right),
\end{equation}
where the sum terms are determined by Eq.~\eqref{eq:prob_of_node}.
Again, by definition of conditional probability,
\begin{equation}
\label{eq:prob_of_other_nodes_conditional_on_individual_node}
  P(q_{-i}^t\mid q_i^t, \pi) = \frac{P\left((q_{-i}^t, q_i^t) \mid \pi\right)}{P(q_i^t\mid \pi)},
\end{equation}
where the term in the numerator is obtained from Eq.~\eqref{eq:prob_of_node}.

\subsection{Value of policy nodes} 
\label{sub:value_of_policy_nodes}
We define the values of a node in a joint policy and an individual policy.

\begin{definition}[Value of a joint policy node]
\label{def:value_of_node}
Given a joint policy $\pi=(Q,q^0,\gamma,\lambda)$, the value of a node $q^t \in Q^{t}$ is defined as
\begin{equation*}
V_t^\pi(q^t) = \mathbb{E}_{h_t \sim P(h^t \mid q^t, \pi)}\left[  V_t^\pi(\tau(h^t), q^t) \right],
\end{equation*}
where $P(h^t\mid q^t, \pi)$ is defined in Eq.~\eqref{eq:prob_of_history_conditional_on_node} and $\tau(h^t)$ is the joint belief corresponding to history $h^t$.
\end{definition}

\begin{definition}[Value of a local policy node]
\label{def:value_of_local_node}
For $i\in I$, let $\pi_i=(Q_i,q_i^0,\gamma_i,\lambda_i)$ be the local policy and let $\pi=(Q,q^0,\gamma,\lambda)$ be the corresponding joint policy.
For any $i \in I$, the value of a local node $q_i^t \in Q_i^{t}$ is
\begin{equation*}
    V_t^\pi(q_i^t) = \mathbb{E}_{q_{-i}^t \sim P(q_{-i}^t \mid q_i^t, \pi)}\left[ V_t^\pi\left((q_{-i}^t, q_i^t)\right)\right],
\end{equation*}
where $P(q_{-i}^t \mid q_i^t, \pi)$ is defined in Eq.~\eqref{eq:prob_of_other_nodes_conditional_on_individual_node}.
\end{definition}
In other words, the value of a local node $q_i^t$ is equal to the expected value of the value of the joint node $(q_{-i}^t, q_i^t)$ under $q_{-i}^t \sim P(q_{-i}^t \mid q_i^t, \pi)$.

\section{Convex-reward Dec-POMDPs} 
\label{sec:properties_of_information_gathering_Dec-POMDPs}
In this section, we prove several results for the value function of a Dec-POMDP whose reward function is convex in $\Delta(S)$.
Convex rewards are of special interest in information gathering.
This is because of their connection to so-called uncertainty functions~\cite{DeGroot2004}, which are non-negative functions concave in $\Delta(S)$.
Informally, an uncertainty function assigns large values to uncertain beliefs, and smaller values to less uncertain beliefs.
Negative uncertainty functions are convex and assign high values to less uncertain beliefs, and are thus suitable as reward functions for information gathering.
Examples of uncertainty functions include Shannon entropy, generalizations such as R{\'e}nyi entropy, and types of value of information, e.g., the probability of error in hypothesis testing.

The following theorem shows that if the immediate reward functions are convex in the joint belief, then the finite horizon value function of any policy is convex in the joint belief.
\begin{theorem}
\label{thm:convex}
  If the reward functions $\rho_T:\Delta(S)\to\mathbb{R}$ and $\rho_t:\Delta(S)\times A \to \mathbb{R}$ are convex in $\Delta(S)$, then for any policy $\pi$, $V_T:\Delta(S)\to\mathbb{R}$ is convex and $V_t^\pi : \Delta(S) \times Q^t \to \mathbb{R}$ is convex in $\Delta(S)$ for any $t$.
\end{theorem}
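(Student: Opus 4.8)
The plan is to prove both statements by backward induction on $t$, the engine being that the \emph{unnormalized} Bayes update is an affine function of the belief together with the fact that the perspective operation preserves convexity.

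For the base case $t=T$ we have $V_T(b)=\rho_T(b)$, which is convex by hypothesis. For the inductive step, fix $t\le T-1$ and a node $q\in Q^t$, write $a=\gamma(q)$, and assume $V_{t+1}^\pi(\cdot,q')$ is convex on $\Delta(S)$ for every $q'\in Q^{t+1}$; for the first step $t=T-1$ this hypothesis is just the base case, invoked through Eq.~\eqref{eq:first_value_fun}, which has the same shape as Eq.~\eqref{eq:value}. Since $\rho_t(\cdot,a)$ is convex and a sum of convex functions is convex, by Eq.~\eqref{eq:value} it suffices to show that for each fixed $z\in Z$ the map
\[
  b \;\longmapsto\; \eta(z\mid b,a)\, V_{t+1}^\pi\!\big(\zeta(b,a,z),\,\lambda(q,z)\big)
\]
is convex on $\Delta(S)$.

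Reading off Eq.~\eqref{eq:bayes_filter}, define $g_z(b)\in\mathbb{R}^{|S|}$ by $g_z(b)(s')=P^z(z\mid s',a)\sum_{s\in S}P^s(s'\mid a,s)\,b(s)$. This is affine and componentwise nonnegative in $b$, the normalizer $\eta(z\mid b,a)=\sum_{s'\in S}g_z(b)(s')$ is an affine scalar function of $b$, and $\zeta(b,a,z)=g_z(b)/\eta(z\mid b,a)$ whenever $\eta(z\mid b,a)>0$. Hence the displayed map equals the perspective $(y,c)\mapsto c\,f(y/c)$ of $f:=V_{t+1}^\pi(\cdot,\lambda(q,z))$ evaluated at the affine image $b\mapsto(g_z(b),\eta(z\mid b,a))$, where $f$ is convex by the induction hypothesis. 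Since the perspective of a convex function is convex on $\{c>0\}$ and precomposition with an affine map preserves convexity, the map is convex wherever $\eta(z\mid b,a)>0$; at the remaining beliefs, nonnegativity of $g_z$ forces $g_z(b)=0$ whenever $\eta(z\mid b,a)=0$, the product is $0$ there, and taking the lower-semicontinuous closure of the perspective (whose value at the origin is $0$ because $\Delta(S)$ is bounded) shows the extended map is convex on all of $\Delta(S)$. Summing over $z\in Z$ and adding $\rho_t(\cdot,a)$ gives convexity of $V_t^\pi(\cdot,q)$, closing the induction.

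\textbf{The main obstacle} is recognizing the weighted next-stage value term as a perspective of $V_{t+1}^\pi(\cdot,\lambda(q,z))$ precomposed with an affine map; once that is in place the rest is routine bookkeeping. The only delicate point is the behaviour at beliefs that make $z$ a priori impossible, which is handled cleanly by the closure property of perspective functions and does not affect convexity on $\Delta(S)$.
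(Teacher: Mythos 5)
Your proof is correct, and it follows the same backward-induction skeleton as the paper's, but it justifies the key step differently --- and, in fact, more rigorously. The paper's own proof asserts that ``the Bayes filter $\zeta(b,a,z)$ is a linear function of $b$'' and then treats $\sum_{z}\eta(z\mid b,a)V_T(\zeta(b,a,z))$ as a non-negative weighted sum of convex functions. Taken literally, neither claim holds: by Eq.~\eqref{eq:bayes_filter}, $\zeta(b,a,z)$ is a ratio of an affine function of $b$ to the affine normalizer $\eta(z\mid b,a)$, so it is projective rather than linear, and the ``weights'' $\eta(z\mid b,a)$ themselves depend on $b$, so the weighted-sum rule for convexity does not apply directly. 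Your observation that the product $\eta(z\mid b,a)\,V_{t+1}^\pi(\zeta(b,a,z),\lambda(q,z))$ is exactly the perspective of the convex function $V_{t+1}^\pi(\cdot,\lambda(q,z))$ evaluated at the affine image $b\mapsto(g_z(b),\eta(z\mid b,a))$ is the standard way to repair this (it is the argument implicit in the single-agent $\rho$POMDP analysis the paper generalizes, equivalently phrased via positively homogeneous extensions of the value function to unnormalized beliefs), and your treatment of the degenerate case $\eta(z\mid b,a)=0$ --- where nonnegativity forces $g_z(b)=0$ and the lower-semicontinuous closure of the perspective assigns the value $0$ --- is a detail the paper does not address at all. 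So what the paper's two-line argument buys is brevity at the cost of a genuine imprecision; what your argument buys is a proof that actually goes through, at the cost of invoking the perspective-function machinery. No gap on your side.
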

\begin{proof}
  Let $\pi=(Q,q^0,\gamma,\lambda)$, and $b\in \Delta(S)$.
  We proceed by induction ($V_T(b) = \rho_T(b)$ is trivial). 
  For $t=T-1$, let $q^{T-1} \in Q^{T-1}$, and denote $a := \gamma(q^{T-1})$.
  From Eq.~\eqref{eq:first_value_fun},
  $V_{T-1}^\pi(b,q^{T-1}) = \rho_{T-1}(b,a) + \sum\limits_{z\in Z}\eta\left(z\mid b, a\right)V_T\left(\zeta\left(b,a, z\right)\right)$.
  We recall from above that $V_T$ is convex, and by Eq.~\eqref{eq:bayes_filter}, the Bayes filter $\zeta\left(b,a, z\right)$ is a linear function of $b$.
  The composition of a linear and convex function is convex, so $V_T\left(\zeta\left(b,a, z\right)\right)$ is a convex function of $b$.
  The non-negative weighted sum of convex functions is also convex, and by assumption $\rho_{T-1}$ is convex in $\Delta(S)$, from which it follows that $V_{T-1}^\pi$ is convex in $\Delta(S)$.

  Now assume $V_{t+1}^\pi$ is convex in $\Delta(S)$ for some $1 \leq t \leq T-1$.
  By the definition in Eq.~\eqref{eq:value} and the same argumentation as above, it follows that $V_t^\pi$ is convex in $\Delta(S)$.
\end{proof}
Since a sufficiently large policy graph can represent any policy, we infer that the value function of an optimal policy is convex.

The following corollary gives a lower bound for the value of a policy graph node.
\begin{corollary}
\label{cor:lower_bound}
  Let $g^t: H^t \to [0, 1]$ be a probability mass function over the joint histories at time $t$.
  If the reward functions $\rho_T:\Delta(S)\to\mathbb{R}$ and $\rho_t:\Delta(S)\times A \to \mathbb{R}$ are convex in $\Delta(S)$, then for any time step $t$ and any policy $\pi$,
  \begin{equation*}
  \begin{split}
  \mathbb{E}_{h^t \sim g(h^t)} \left[V_t^\pi(\tau(h^t), q^t) \right] \geq V_t^\pi\left(\mathbb{E}_{h^t \sim g(h^t)}\left[\tau(h^t)\right], q^t\right).
  \end{split}  
  \end{equation*}
\end{corollary}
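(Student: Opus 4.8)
The plan is to recognize this as a direct application of Jensen's inequality to the map $b \mapsto V_t^\pi(b, q^t)$, using the convexity established in Theorem~\ref{thm:convex}. First I would fix an arbitrary time step $t$, an arbitrary policy $\pi = (Q, q^0, \gamma, \lambda)$, an arbitrary node $q^t \in Q^t$, and an arbitrary pmf $g^t$ over the (finite) set $H^t$. I would then observe that by Theorem~\ref{thm:convex}, the function $f: \Delta(S) \to \mathbb{R}$ defined by $f(b) := V_t^\pi(b, q^t)$ is convex on $\Delta(S)$.

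Next I would check that both sides of the claimed inequality are well-defined. Each $\tau(h^t)$ lies in $\Delta(S)$ by definition of the Bayes filter, and since $\Delta(S)$ is a convex set and $g^t$ has finite support, the point $\bar b := \mathbb{E}_{h^t \sim g(h^t)}[\tau(h^t)] = \sum_{h^t \in H^t} g^t(h^t)\,\tau(h^t)$ is a finite convex combination of elements of $\Delta(S)$ and hence again lies in $\Delta(S)$; thus $f(\bar b) = V_t^\pi(\bar b, q^t)$ is meaningful. With finite support there are no measure-theoretic subtleties, so Jensen's inequality for a convex function over a finite convex combination applies verbatim:
\[
\mathbb{E}_{h^t \sim g(h^t)}\!\left[ f(\tau(h^t)) \right] = \sum_{h^t \in H^t} g^t(h^t)\, f(\tau(h^t)) \geq f\!\left( \sum_{h^t \in H^t} g^t(h^t)\, \tau(h^t) \right) = f(\bar b).
\]
Rewriting $f$ back as $V_t^\pi(\cdot, q^t)$ yields exactly the stated bound.

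There is no substantial obstacle here: the only thing that needs care is invoking Theorem~\ref{thm:convex} with the node argument held \emph{fixed} — the theorem asserts convexity in the belief argument for every fixed $t$ and every fixed $q^t$, which is precisely what the Jensen step requires — and noting that the finiteness of $H^t$ keeps the convex combination finite so that the elementary (finite) form of Jensen's inequality suffices. If one wanted to be fully explicit one could even dispense with naming Jensen and instead induct on the size of the support of $g^t$, using the two-point convexity inequality $f(\alpha b_1 + (1-\alpha) b_2) \le \alpha f(b_1) + (1-\alpha) f(b_2)$ repeatedly, but this is routine and not worth grinding through.
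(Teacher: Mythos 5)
Your proposal is correct and follows exactly the paper's own argument: the paper likewise cites Theorem~\ref{thm:convex} for convexity of $V_t^\pi$ in its belief argument (with $q^t$ fixed) and then applies Jensen's inequality. Your additional remarks on well-definedness of the mean belief and the finiteness of $H^t$ are fine but add nothing beyond the paper's one-line proof.
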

\begin{proof}
By Theorem~\ref{thm:convex}, $V_t^\pi: \Delta(S) \times Q^t \to \mathbb{R}$ is convex in $\Delta(S)$.
The claim immediately follows applying Jensen's inequality.
\end{proof}
Applied to Definition~\ref{def:value_of_node}, the corollary says the value of a joint policy node $q^t$ is lower bounded by the value of the expected joint belief at $q^t$.
Applied to Definition~\ref{def:value_of_local_node}, we obtain a lower bound for the value of a local policy node $q_i^t$ as
\begin{equation*}
    V_t^\pi(q_i^t) \geq \mathbb{E}_{q_{-i}^t \sim P(q_{-i}^t \mid q_{i}^t, \pi)} \left[ V_t^\pi\left( \mathbb{E}_{h^t \sim P(h^t \mid q^t, \pi)} \left[\tau(h^t) \right] , q^t\right) \right],
\end{equation*}
where inside the inner expectation we write $(q_{-i}^t,q_i^t)=q^t$.
Thus, we can evaluate a lower bound for the value of any local node $q_i^t \in Q_i^t$ by finding the values $V_t^\pi(q^t)$ of all joint nodes $q^t \in Q^t$ and then taking the expectation of $V_t^\pi(q^t)$ where $q^t=(q_{-i}^t,q_i^t)$ under $P(q_{-i}^t \mid q_i^t, \pi)$.

Corollary~\ref{cor:lower_bound} has applications in policy improvement algorithms that iteratively improve the value of a policy by modifying the output and node transition functions at each local policy node.
Instead of directly optimizing the value of a node, the lower bound can be optimized.
We present one such algorithm in the next section.

As Corollary~\ref{cor:lower_bound} holds for any pmf over joint histories, it could be applied also with pmfs other than $P(h^t \mid q^t, \pi)$.
For example, if it is expensive to enumerate the possible histories and beliefs at a node, one could approximate the lower bound, e.g., through importance sampling~\cite[Ch.~23.4]{Murphy2012}.

In standard Dec-POMDPs, the expected reward is a linear function of the joint belief.
Then, the corollary above holds with equality.
\begin{corollary}
\label{cor:state_dependent}
  Consider a Dec-POMDP where the reward functions are defined as $\rho_T(b) = \sum\limits_{s\in S}b(s)R_T(s)$ and for $0 \leq t \leq T-1$, $\rho_t(b,a) = \sum\limits_{s\in S} b(s)R_t(s,a)$, where $R_T:S\to \mathbb{R}$ is a state-dependent final reward function and $R_t:S\times A \to \mathbb{R}$ are the state-dependent reward functions.
  Then, the conclusion of Corollary~\ref{cor:lower_bound} holds with equality.
\end{corollary}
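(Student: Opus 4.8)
The plan is to show that under the linear rewards assumed here the value functions $V_t^\pi(\cdot,q^t)$ are not merely convex but \emph{affine} in $b$, after which Corollary~\ref{cor:lower_bound} holds with equality by the equality case of Jensen's inequality. The headline observation is that an affine function is simultaneously convex and concave. The rewards $\rho_T(b)=\sum_s b(s)R_T(s)$ and $\rho_t(b,a)=\sum_s b(s)R_t(s,a)$ are linear, hence both convex \emph{and} concave; Theorem~\ref{thm:convex} already gives convexity of $V_t^\pi(\cdot,q^t)$, and running its backward induction verbatim with ``concave'' substituted for ``convex'' at every step yields concavity as well. An affine function satisfies Jensen's inequality with equality, so Corollary~\ref{cor:lower_bound} applied to such a $V_t^\pi(\cdot,q^t)$ holds with equality; together with Definitions~\ref{def:value_of_node} and~\ref{def:value_of_local_node} this also turns the displayed node-value bounds into equalities.

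For a concrete, self-contained execution I would instead prove linearity directly by backward induction, which simultaneously exhibits the equality. The base case is $V_T(b)=\rho_T(b)=\sum_s b(s)R_T(s)$. Assuming $V_{t+1}^\pi(b,q')=\sum_{s'}b(s')\alpha_{t+1}^{q'}(s')$ for every $q'\in Q^{t+1}$, I would substitute into Eq.~\eqref{eq:value} (and Eq.~\eqref{eq:first_value_fun} for $t=T-1$) and use the cancellation built into the Bayes filter: by Eq.~\eqref{eq:bayes_filter}, $\eta(z\mid b,a)\,\zeta(b,a,z)(s')=P^z(z\mid s',a)\sum_{s}P^s(s'\mid a,s)b(s)$, which is linear in $b$ even though neither factor on the left is. Collecting terms gives $V_t^\pi(b,q^t)=\sum_s b(s)\alpha_t^{q^t}(s)$ with $\alpha_t^{q^t}(s)=R_t(s,\gamma(q^t))+\sum_{z}\sum_{s'}P^z(z\mid s',\gamma(q^t))P^s(s'\mid\gamma(q^t),s)\alpha_{t+1}^{\lambda(q^t,z)}(s')$. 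Then for any pmf $g^t$ over $H^t$, linearity of expectation gives $\mathbb{E}_{h^t\sim g^t}[V_t^\pi(\tau(h^t),q^t)]=\sum_s \mathbb{E}_{h^t\sim g^t}[\tau(h^t)(s)]\,\alpha_t^{q^t}(s)=V_t^\pi(\mathbb{E}_{h^t\sim g^t}[\tau(h^t)],q^t)$, i.e., Corollary~\ref{cor:lower_bound} holds with equality.

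The only step that needs care — and the same subtlety is present in the proof of Theorem~\ref{thm:convex} — is that the weights $\eta(z\mid b,\gamma(q))$ in the expectation in Eq.~\eqref{eq:value} depend on $b$, so one cannot merely cite ``weighted sum of affine (or concave) functions'' with fixed weights. The direct induction sidesteps this cleanly via the cancellation above; in the ``convex and concave'' argument the corresponding fact is that $b\mapsto\eta(z\mid b,a)\,V_{t+1}^\pi(\zeta(b,a,z),\lambda(q,z))$ is the perspective transform of $V_{t+1}^\pi(\cdot,\lambda(q,z))$ precomposed with a linear map, hence inherits both convexity and concavity from $V_{t+1}^\pi(\cdot,\lambda(q,z))$. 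Beyond that the argument is routine bookkeeping, so I expect no genuine obstacle.
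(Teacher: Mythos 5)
Your proposal is correct and its main line of argument --- backward induction showing $V_t^\pi(\cdot,q^t)$ is linear in $b$ via the cancellation $\eta(z\mid b,a)\,\zeta(b,a,z)(s')=P^z(z\mid s',a)\sum_{s}P^s(s'\mid a,s)b(s)$, followed by linearity of expectation --- is exactly the route the paper takes, with your explicit $\alpha$-vector recursion merely making the induction hypothesis more concrete. Your side remark about the $b$-dependence of the weights $\eta(z\mid b,\gamma(q))$ is a fair observation about how the convexity argument in Theorem~\ref{thm:convex} should be read, but it does not change the proof of this corollary.
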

\begin{proof}
  Let $\pi=(Q,q_0,\gamma,\lambda)$ and $b\in \Delta(S)$.
  First note that $V_T(b) = \rho_T(b) = \sum\limits_{s\in S}b(s)R_T(s)$.
  Consider then $t=T-1$, and let $q^{T-1}\in Q^{T-1}$, and write $a:=\gamma(q^{T-1})$.
  Then from the definition of $V_{T-1}^\pi$ in Eq.~\eqref{eq:first_value_fun}, consider first the latter sum term which equals
  \begin{equation*}
    \begin{split}
      &\sum\limits_{z\in Z}\eta(z \mid b, a)\sum\limits_{s'\in S}\zeta(b,a,z)(s')R_T(s')\\
      =& \sum\limits_{s'\in S}\left[\sum\limits_{z\in Z}\sum\limits_{s \in S}P^z(z\mid s',a)P^s(s'\mid a,s)b(s)\right]R_T(s')
    \end{split}
  \end{equation*}
  which follows by replacing $\zeta(b,a,z)$ by Eq.~\eqref{eq:bayes_filter}, canceling out $\eta(z\mid b, a)$, and rearranging the sums. The above is clearly a linear function of $b$, and by definition, so is $\rho_t$, the first part of $V_{T-1}^\pi$.
  Thus, $V_{T-1}^\pi:\Delta(S) \times Q^{T-1} \to \mathbb{R}$ is linear in $\Delta(S)$.
  By an induction argument, it is now straightforward to show that $V_t^\pi$ is linear in $\Delta(S)$ for all $0 \leq t \leq T-1$.
  Finally,
  \begin{equation*}
    \mathbb{E}_{h^t \sim g(h^t)} \left[V_t^\pi(\tau(h^t), q^t) \right] = V_t^\pi\left(\mathbb{E}_{h^t \sim g(h^t)}\left[\tau(h^t)\right], q^t\right)
  \end{equation*}
  for any pmf $g$ over joint histories by linearity of expectation.
\end{proof}
Corollary~\ref{cor:state_dependent} shows that a solution algorithm for a Dec-POMDP with a reward convex in the joint belief that uses the lower bound from Corollary~\ref{cor:lower_bound} will also work for standard Dec-POMDPs with a reward linear in the joint belief.

Since a linear function is both convex and concave, rewards that are state-dependent and rewards that are convex in the joint belief can be combined on different time steps in one Dec-POMDP and the lower bound still holds.

\section{Policy graph improvement} 
\label{sec:policy_graph_improvement}
The Policy Graph Improvement (PGI) algorithm~\cite{Pajarinen2011} was originally introduced for standard Dec-POMDPs with reward function linear in the joint belief.
PGI monotonically improves policies by locally modifying the output and node transition functions of the individual agents' policies.
The policy size is fixed, such that the worst case computation time for an improvement iteration is known in advance. 
Moreover, due to the limited size of the policies the method produces compact, understandable policies.

We extend PGI to the non-linear reward case, and call the method non-linear PGI (NPGI).
Contrary to tree based Dec-POMDP approaches the policy does not grow double-exponentially with the planning horizon as we use a fixed size policy.
If the reward function is convex in $\Delta(S)$, NPGI may improve the lower bound from Corollary~\ref{cor:lower_bound}.
The lower bound is tight when each policy graph node corresponds to only one history suggesting we can improve the quality of the lower bound by increasing policy graph size.

NPGI is shown in Algorithm~\ref{alg:npgi}.
At each improvement step, NPGI repeats two steps: the forward pass and the backward pass.
In the forward pass, we use the current best joint policy to find the set $B$ of expected joint beliefs at every policy graph node.
In practice, we do this by first enumerating for each agent the sets of local histories ending at all local nodes, then taking the appropriate combinations to create the joint histories for joint policy graph nodes.
We then evaluate the expected joint beliefs at every joint policy graph node.

In the backward pass, we improve the current policy by modifying its output and node transition functions locally at each node.
As output from the backward pass, we obtain an updated policy $\pi^+$ using the improved output and node transition functions $\gamma^+$ and $\lambda^+$, respectively.
As NPGI may optimize a lower bound of the node values, we finally check if the value of the improved policy, $V_0^{\pi^+}(b^0,q^0)$, is greater than the value of the current best policy, and update the best policy if necessary.

\begin{algorithm}[t]
\caption{NPGI}
\label{alg:npgi}
\begin{algorithmic}[1]
\Require Policy $\pi=(Q,q^0,\gamma,\lambda)$, initial belief $b^0$
\Ensure Improved policy $\pi$
\While {not converged and time limit not exceeded}
  \State $B \gets $\Call{ForwardPass}{$\pi$, $b^0$}
  \State $\pi^+ \gets $\Call{BackwardPass}{$\pi$, $B$}
  \If {$V_0^{\pi^+}(b^0, q^0) \geq V_0^{\pi}(b^0, q^0)$}
     $\pi \gets \pi^+$ 
  \EndIf
\EndWhile
\State \Return $\pi$
\end{algorithmic}
\end{algorithm}

\begin{algorithm}[t]
\caption{BackwardPass}
\label{alg:backward}
\begin{algorithmic}[1]
\Require Policy $\pi=(Q,q^0,\gamma,\lambda)$, expected beliefs $B = \{b^q \mid q \in Q\}$
\Ensure Policy $\pi^+$ with improved output and node transition functions $\gamma^+$, $\lambda^+$
\State $\gamma^+ \gets \gamma, \lambda^+ \gets \lambda$
\For{$t = T-1, \ldots, 0$}
  \For{$i \in I$}
    \State $W_i^t \gets \emptyset$
    \For{$q_i^t \in Q_i^t$}
      \If{$t = T-1$}
      \State Solve Eq.~\eqref{eq:last_step_optimization}, assign $\gamma_i^+(q_i^t)$
      \Else
      \State Solve Eq.~\eqref{eq:step_optimization}, assign $\gamma_i^+(q_i^t)$ and $\lambda_i^+(q_i^t, z_i) \forall z_i$
      \EndIf
      \If{$\exists w_i^t \in W_i^t: $ \Call{SamePolicy}{$w_i^t$, $q_i^t$}} \label{line:check_same}
      \State \Call{Redirect}{$q_i^t$, $w_i^t$}
      \State \Call{Randomize}{$q_i^t$} \label{line:randomize}
      \EndIf
      \State $W_i^t \gets W_i^t \cup \{q_i^t\}$
    \EndFor
  \EndFor
\EndFor
\State \Return $(Q,q^0,\gamma^+, \lambda^+)$

\Procedure{SamePolicy}{$q_i^t,w_i^t$}
  \If{$\gamma_i^+(q_i^t) == \gamma_i^+(w_i^t) \wedge \forall z_i: \lambda_i^+(q_i^t,z_i)==\lambda_i^+(w_i^t,z_i)$}
    \State \Return True
  \Else 
    \State \Return False
  \EndIf
\EndProcedure
\Procedure{Redirect}{$q_i^t,w_i^t$}
  \For{$(x, z_i) \in \{(x,z_i) \in Q_i^{t-1}\times Z_i \mid \lambda_i^+(x,z_i) = q_i^t\}$}
    \State $\lambda_i^+(x,z_i) = w_i$
  \EndFor
\EndProcedure
\Procedure{Randomize}{$q_i^t$}
\State $\gamma_i^+(q_i^t) \sim \text{Uniform}(A_i)$
\If{$t \neq T-1$}
  \State $\forall z_i \in Z_i: \lambda_i^+(q_i^t, z_i) \sim \text{Uniform}(Q_i^{t+1})$
\EndIf
\EndProcedure
\end{algorithmic}
\end{algorithm}

\paragraph{Backward pass.}
The backward pass of NPGI is shown in Algorithm~\ref{alg:backward}.
At time step $t$ for agent $i$, for each node $q_i^t \in Q_i^t$, we maximize either the value $V_t^{\pi{^+}}(q_i^t)$ or its lower bound with respect to the local policy parameters.
In the following, we present the details for maximizing the lower bound, the algorithm for the exact value can be derived analogously.

For $t=T-1$, we consider the last remaining action.
Fix a local node $q_i^{T-1} \in Q_i^{T-1}$.
Denote the expected belief at $q^{T-1} = (q_1^{T-1}, \ldots, q_n^{T-1}) \in Q^{T-1}$ as $b := \mathbb{E}_{h^{T-1} \sim P(h^{T-1} \mid q^{T-1}, \pi)} \left[\tau(h^{T-1}) \right]$.\\
We write $a=\left(\gamma_1^+(q_1^{T-1}), \ldots, a_i^{T-1}, \ldots, \gamma_n^+(q_n^{T-1})\right) \in A$ as the joint action where local actions of all other agents except $i$ are fixed to those specified by the current output function.
We solve
\begin{equation}
\label{eq:last_step_optimization}
  \!\! \!\! \max\limits_{a_i^{T-1}\in A_i}\!\! \mathbb{E}\left[ \rho_{T-1}\left(b, a\right)  + \mathbb{E}\left[V_T(\zeta(b, a, z)\right]\right]
\end{equation}
where the outer expectation is under $q_{-i}^{T-1} \sim P(q_{-i}^{T-1} \mid q_i^{T-1}, \pi)$, the distribution over the nodes of agents other than $i$, and the inner expectation is under $\eta(z \mid b, a)$.
Note that in general, $b$ is different for each $q_{-i}^{T-1}$, as $q^{T-1}=(q_{-i}^{T-1},q_i^{T-1})$ will be different.
We assign $\gamma_i^+(q_i^{T-1})$ equal to the local action that maximizes Eq.~\eqref{eq:last_step_optimization}.
Note that this modification of the policy does not invalidate any of the expected beliefs at the nodes in $Q$.

For $t\leq T-1$, we consider both the current action and the next nodes via the node transition function.
Fix a local node $q_i^{t} \in Q_i^{t}$, and define $a$ and $b$ similarly as above.
Additionally, for any joint observation $z=(z_1, \ldots, z_n) \in Z$, define
\begin{equation*}
  q^{t+1}(z) = \left(\lambda_1^+(q_1^t, z_1), \ldots, q_i^{z_i}, \ldots, \lambda_n^+(q_n^t, z_n)\right)
\end{equation*}
as the next node in $Q^{t+1}$ when transitions of all other agents except $i$ are fixed to those specified by the current node transition function.
We solve
\begin{equation}
\begin{split}
\label{eq:step_optimization}
  \max\limits_{ \substack{a_i^t \in A_i \\ \forall z_i \in Z_i: q_i^{z_i} \in Q_i^{t+1}}} \mathbb{E} &\left[ \rho_t\left(b, a\right) + \mathbb{E}\left[V_{t+1}^{\pi{^+}}\left(\zeta(b,a,z), q^{t+1}(z)\right)\right] \right],
\end{split}
\end{equation}
where the outer expectation is under $q_{-i}^{t} \sim P(q_{-i}^{t} \mid q_i^{t}, \pi)$, and the inner expectation is under $\eta(z \mid b, a)$.
We assign $\gamma_i^+(q_i^t)$ and $\lambda_i^+(q_i^t, \cdot)$ to the respective maximizing values of Eq.~\eqref{eq:step_optimization}.
This assignment potentially invalidates the expected beliefs in $B$ for any nodes in $Q^k$ for $k \geq t+1$.
However, as in the subsequent optimization steps we only require the expected beliefs for $Q^k$, $k \leq t$, we do not need to repeat the forward pass.

Line~\ref{line:check_same} of Algorithm~\ref{alg:backward} checks if there exists a node $w_i^t$ that we have already optimized that has the same local policy as the current node $q_i^t$.
If such a node exists, we redirect all of the in-edges of $q_i^t$ to $w_i^t$ instead.
This redirection is required to maintain correct estimates of the respective node probabilities in the algorithm.
If we redirected the in-edges of $q_i^t$ to $w_i^t$, on Line~\ref{line:randomize} we randomize the local policy of the now useless node $q_i^t$ that has no in-edges\footnote{To randomize the local policy of a node $q_i^t\in Q_i^t$, we sample new local policies until we find one that is not identical to the local policy of any other node in $Q_i^t$. Likewise, when randomly initializing a new policy in our experiments we avoid including in any $Q_i^t$ nodes with identical local policies.}, in the hopes that it may be improved on subsequent backward passes.
If a node $q_i^t$ is to be improved that is unreachable, i.e., it has no in-edges or the probabilities of all histories ending in it are zero, we likewise randomize the local policy at that node.

\paragraph{Policy initialization.}
We initialize a random policy for each agent $i\in I$ with a given policy graph width $\left|Q_i^t\right|$ for each $t$ as follows\footnote{At the last time step, it is only meaningful to have $\left|Q_i^T\right| \leq \left|A_i\right|$. In our experiments if $\left|Q_i^T\right|>\left|A_i\right|$, we instead set $\left|Q_i^T\right|=\left|A_i\right|$.}.
For example, for a problem with $T=3$ and $\left|Q_i^t\right|=2$, we create a policy similar to Fig.~\ref{fig:pg_example} for each agent, where there is one initial node $q_i^0$, and 2 nodes at each time step $t\geq 1$.
The action determined by the output function $\gamma_i(q_i)$ is sampled uniformly at random from $A_i$.
For each node $q_i^t \in Q_i^t$ for $0 \leq t \leq T-1$, we sample a next node from $Q_i^{t+1}$ uniformly at random for each observation $z_i \in Z_i$ and assign the node transition function $\lambda_i(q_i,z_i)$ accordingly.

\section{Experiments} 
\label{sec:experiments}
We evaluate the performance of NPGI on information gathering Dec-POMDPs.
In the following, we introduce the problem domains, the experimental setup, and present the results.

\subsection{Problem domains} 
\label{sub:problem_domains}
We run experiments on the micro air vehicle (MAV) domain of~\cite{Lauri2017} and propose an information gathering rovers domain inspired by the Mars rovers domain of~\cite{Amato2009}.
In both tasks the objective of the agents is to maximize the expected sum of rewards collected minus the entropy of the joint belief at the end of the problem horizon.

\begin{figure}[t]
\begin{subfigure}[b]{0.45\columnwidth}
\centering
\begin{tikzpicture} [nodes in empty cells,
      nodes={minimum width=0.5cm, minimum height=0.5cm},
      row sep=-\pgflinewidth, column sep=-\pgflinewidth]
      border/.style={draw}
      \matrix(vector)[matrix of nodes,
          nodes={draw}]
      {
          $l_{0}$ & $l_{1}$ & $l_{2}$ & $l_{3}$\\
      };
      \node[left of=vector, xshift=-0.4cm] {MAV1};
      \node[right of=vector, xshift=0.4cm] {MAV2};
\end{tikzpicture}
\end{subfigure}~
\begin{subfigure}[b]{0.45\columnwidth}
\centering
\begin{tikzpicture} [nodes in empty cells,
      nodes={minimum width=0.5cm, minimum height=0.5cm},
      row sep=-\pgflinewidth, column sep=-\pgflinewidth]
      border/.style={draw}
      \matrix(vector)[matrix of nodes,
          nodes={draw}]
      {
          $l_{0}$ & $l_{2}$ \\
          $l_{1}$ & $l_{3}$ \\
      };
\end{tikzpicture}
\end{subfigure}
\caption{Arrangement of locations in the MAV domain (left) and rovers domain (right).} 
\label{fig:domains}
\end{figure}
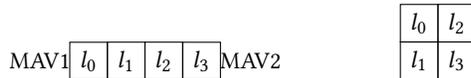

\paragraph{MAV domain.}
A target moves between four possible locations, $l_i$ in Figure~\ref{fig:domains}.
The target is either friendly or hostile; a hostile target moves more aggressively.
Two MAVs, MAV1 and MAV2 in the figure, are tasked with tracking the target and inferring whether it is friendly or hostile.
The MAVs can choose to use either a camera or a radar sensor to sense the location of the target.
An observation from either sensor corresponds to a noisy measurement of the target's location.
The camera is more accurate if the target is close, whereas the radar is more accurate when the target is further away.
The Manhattan distance is applied, i.e., at $l_0$ the target is at distance 0 from MAV1 and at distance 3 from MAV2.
If both MAVs apply their radars simultaneously, accuracy decreases due to interference.

Using the camera has zero cost, and using the radar sensor has a cost of 0.1, and an additional cost of 1 or 0.1 if the target is at distance 0 or 1 to the MAV, respectively, to model the risk of revealing the MAVs own location to the (potentially hostile) target.
To model information gathering, we set the final reward equal to the negative Shannon entropy of the joint belief, i.e., $\rho_T(b) = \sum\limits_{s\in S} b(s)\log_2 b(s)$.
The initial belief is a uniform over all states.
This problem has 8 states; 4 target locations and a binary variable for friendly/hostile, and 2 actions and 4 observations per agent.

\paragraph{Information gathering rovers.}
Two rovers are collecting information on four sites $l_i$ of interest arranged as shown in Figure~\ref{fig:domains}.
Each site is in one of two possible states which remains fixed throughout.
The agents can move north, south, east, or west.
Movement fails with probability 0.2, in which case the agent remains at its current location.
The agents always fully observe their own location.
Additionally, the agents can choose to conduct measurements of the site they are currently at.
A binary measurement of the site status is recorded with false positive and false negative probabilities of 0.2 each.
If the agents measure at the same location at the same time, the false positive and false negative probabilities are significantly lower, 0.05 and 0.01, respectively.
Movement has zero cost, while measuring has a cost of 0.1.
The final reward is equal to the negative entropy.
The initial belief is such that one agent starts at $l_0$, the other at $l_3$, with a uniform belief over the site status.
The problem has 256 states, and 5 local actions and 8 local observations per agent.

\subsection{Experimental setup} 
\label{sub:experimental_setup}
We compare NPGI to one exact algorithm and two heuristic algorithms.
The exact method we employ is the Generalized Multi-Agent A* with incremental expansion (GMAA*-ICE)~\cite{Oliehoek2013} with the QPOMDP search heuristic.
According to~\cite{Oliehoek2013} a vector representation of the search heuristic, analogous to the representation of an optimal POMDP value function by a set of so-called $\alpha$-vectors~\cite{Smallwood1973}, can help scale up GMAA*-ICE to larger problems.
However, since the vector representation only exists if the reward function is linear in the joint belief, we represent the search heuristic as a tree.
The two heuristic methods are joint equilibrium based search for policies (JESP)~\cite{Nair2003} and direct cross-entropy policy search (DICEPS)~\cite{Oliehoek2008}.

All of the methods above are easily modified to our domains where the final reward is equal to the negative Shannon entropy.
However, applicability of NPGI is wider as it allows the reward at \emph{any} time step to be a convex function of the joint belief.
We note that there are other algorithms such as FB-HSVI~\cite{Dibangoye2016} and PBVI-BB~\cite{MacDermed2013} that have demonstrated good performance on many benchmarks.
However, these algorithms rely on linearity of the reward to achieve compression of histories and joint beliefs, and non-trivial modifications beyond the scope of this work would be required to extend them to Dec-POMDPs with non-linear rewards.

As baselines, we report values of a greedy open loop policy that executes a sequence of joint actions that has the maximal expected sum of rewards under the initial belief, and the best blind policy that always executes the same joint action.

We run NPGI using both the exact value of nodes and the lower bound from Corollary~\ref{cor:lower_bound}.
The number of policy graph nodes $\left|Q_i^t\right|$ at each time step $t$ is 2, 3, or 4.
For each run with NPGI we run 30 backward passes, starting from randomly sampled initial policies.
For all methods, we report the averages over 100 runs.
If a run does not finish in 2 hours, we terminate it.

\subsection{Results} 
\label{sub:Dec-POMDP_tasks}
Tables~\ref{tab:mav} and~\ref{tab:rover} show the average policy values in the MAV and information gathering rovers problems, respectively.
NPGI is indicated by ``Ours'' when the lower bound (LB) was used, and as ``Ours (No LB)'' when exact evaluation of node values was applied.
Results are reported as function of the problem horizon $T$, and for NPGI also as function of the policy graph size $\left|Q_i^t\right|$.
The symbol ``-'' indicates missing results due to exceeding the cut-off time.

GMAA*-ICE finds an optimal solution, but similarly to~\cite{Lauri2017} we find that it does not scale beyond $T=3$ in either problem.
Considering $T=2$ and $T=3$, the average values of our method are very close to the optimal value in both problems.
In these cases, we found that NPGI finds an optimal policy in all the MAV problem runs, and in about 60\% of the MAV problem runs.

\begin{table}[t]
\caption{Average policy values in the MAV domain ($|S|=8$, $\left|A_i\right|=2$, $\left|Z_i\right|=4$).}
\label{tab:mav}
\begin{tabular}{@{}lcllll@{}}
\toprule
Method         & $\left|Q_i^t\right|$    & $T=2$  & $T=3$  & $T=4$  & $T=5$  \\ \midrule
               & 2            & -1.919 & -1.831 & -1.768 & -1.725 \\
Ours           & 3            & -1.919 & -1.831 & -1.768 & -1.725 \\
               & 4            & -1.919 & -1.831 & -1.768 & -1.725 \\ \midrule
               & 2            & -1.919 & -1.831 & -1.768 & -1.726 \\
Ours (No LB)   & 3            & -1.919 & -1.831 & -1.768 & -1.725 \\
               & 4            & -1.919 & -1.831 & -1.768 & -1.726 \\ \midrule
\multicolumn{2}{l}{DICEPS}    & -1.925 & -1.937 & -1.926 & -1.940 \\
\multicolumn{2}{l}{JESP}      & -1.953 & -1.859 & -1.794 & -1.750 \\
\multicolumn{2}{l}{GMAA*-ICE} & -1.919 & -1.831 & -      & -      \\
\multicolumn{2}{l}{Greedy}    & -2.156 & -2.044 & -1.978 & -1.932 \\
\multicolumn{2}{l}{Blind}     & -1.945 & -1.904 & -1.909 & -1.932 \\ \bottomrule
\end{tabular}
\end{table}

\begin{table}[t]
\caption{Average policy values in the information gathering rovers domain ($|S|=256$, $\left|A_i\right|=5$, $\left|Z_i\right|=8$). }
\label{tab:rover}
\begin{tabular}{@{}lcllll@{}}
\toprule
Method         & $\left|Q_i^t\right|$  & $T=2$  & $T=3$  & $T=4$  & $T=5$  \\ \midrule
               & 2            & -3.495 & -3.189 & -3.034 & -2.989 \\
Ours           & 3            & -3.498 & -3.189 & -3.034 & -2.977 \\
               & 4            & -3.500 & -3.189 & -3.034 & -3.004 \\ \midrule
               & 2            & -3.495 & -3.189 & -3.035 & -2.976 \\
Ours (No LB)   & 3            & -3.498 & -3.189 & -3.035 & -3.085 \\
               & 4            & -3.500 & -3.189 & -3.035 & -      \\ \midrule
\multicolumn{2}{l}{DICEPS}    & -3.482 & -3.535 & -3.825 & -4.792 \\
\multicolumn{2}{l}{JESP}      & -3.483 & -3.536 & -      & -      \\
\multicolumn{2}{l}{GMAA*-ICE} & -3.479 & -3.189 & -      & -      \\
\multicolumn{2}{l}{Greedy}    & -3.844 & -4.031 & -3.877 & -3.818 \\
\multicolumn{2}{l}{Blind}     & -3.479 & -3.412 & -3.418 & -3.472 \\ \bottomrule
\end{tabular}
\end{table}

\begin{table}[t]
\caption{Average NPGI backward pass duration (in seconds) with or without lower bound (LB).}
\label{tab:improvement_times}
\begin{tabular}{@{}crrrr@{}}
\toprule
  & \multicolumn{2}{c}{MAV}  &  \multicolumn{2}{c}{Rovers}  \\ \cmidrule(lr){2-3} \cmidrule(lr){4-5}
$T$ & With LB & No LB     & With LB & No LB   \\ \midrule
2   & 0.002     & 0.002   & 0.04    & 0.04    \\
3   & 0.04      & 0.05    & 0.26    & 0.34    \\
4   & 1.20      & 2.74    & 1.43    & 4.40    \\
5   & 31.02     & 55.34   & 32.23   & 158.7   \\ \bottomrule
\end{tabular}
\end{table}

In the MAV problem (Table~\ref{tab:mav}), performance of our method is consistent for varying policy graph size $\left|Q_i^t\right|$ and horizon $T$.
This indicates that even a small policy suffices to reach a high value in this problem.
We also note that applying the lower bound does not reduce the quality of the policy found by our approach.

In the rover problem (Table~\ref{tab:rover}), we observe more variation in policy quality as function of the policy graph size.
However, applying the Mann-Whitney U-test we do not find significant differences (significance level of 0.01) either for varying policy graph size, nor for exact computation versus applying the lower bound.
A compact policy with as few as 2 nodes per time step in the policy graph can reach a high value in this problem as well.

Table~\ref{tab:improvement_times} shows the average duration of one backward pass of Algorithm~\ref{alg:npgi} as function of the problem horizon $T$ with $\left|Q_i^t\right|=2$, with or without using the lower bound (LB).
The lower runtime requirement when applying the lower bound is seen clearly for $T\geq 4$.
The runtime of NPGI is dominated by the backward pass and solving the local policy optimization problems, Eqns.~\eqref{eq:last_step_optimization} and~\eqref{eq:step_optimization}, which applying the lower bound help reduce.
As indicated by the results in Tables~\ref{tab:mav} and~\ref{tab:rover}, applying the lower bound also does not degrade the quality of the policies found.

Our method outperforms the baselines except for $T=2$ in the rover problem where a blind policy of always measuring is optimal.
In several cases, JESP and DICEPS return policies with a value lower than one of both of the baselines.

The size of the policy graph in NPGI must be specified before calculating the policy.
As shown by our experiments, fixing the policy graph size effectively limits the space of policies to be explored and can produce compact and understandable policies.
However, a potential weakness is that optimizing over fixed-size policies excludes the possibility to find a larger but potentially better policy.

\section{Conclusion} 
\label{sec:conclusion}
We showed that if the reward function in a finite-horizon Dec-POMDP is convex in the joint belief, then the value function of any policy is then convex in the joint belief.
Rewards that are convex in the joint belief are of importance in information gathering problems.
We applied the result to derive a lower bound for the value, and empirically demonstrated that it improves the run-time of a heuristic planning algorithm without degrading solution quality.

We presented the first heuristic algorithm for Dec-POMDPs with rewards convex in the joint belief, and showed that it reaches good performance in large Dec-POMDPs.
Future work includes developing an approximation algorithm with bounded suboptimality.
Approximation of the reward function by a piecewise linear function similar to~\cite{Araya2010} is a potential first step towards this goal.

\begin{acks}
J.P. and J.P. were supported by \grantsponsor{erc}{European Research Council}{https://erc.europa.eu/} under Grant No.~\grantnum{erc}{640554} (SKILLS4ROBOTS) and \grantsponsor{dfg}{German Research Foundation}{http://www.dfg.de/en/} project~\grantnum{dfg}{PA 3179/1-1} (ROBOLEAP).
\end{acks}


\bibliographystyle{ACM-Reference-Format}  
\balance  
\bibliography{references}  

\end{document}